\def\eqref#1{equation~\ref{#1}}
\def\1{\bm{1}}
\def\eps{{\epsilon}}
\def\rva{{\mathbf{a}}}
\def\rvg{{\mathbf{g}}}
\def\rvs{{\mathbf{s}}}
\DeclareMathAlphabet{\mathsfit}{\encodingdefault}{\sfdefault}{m}{sl}
\SetMathAlphabet{\mathsfit}{bold}{\encodingdefault}{\sfdefault}{bx}{n}
\def\gA{{\mathsf{A}}}
\def\gF{{\mathsf{F}}}
\def\gP{{\mathcal{P}}}
\def\gS{{\mathsf{S}}}
\def\sR{{\mathbb{R}}}
\def\sV{{\mathbb{V}}}
\newcommand{\E}{\mathbb{E}}
\newcommand{\Var}{\mathrm{Var}}
\DeclareMathOperator*{\argmin}{arg\,min}
\DeclareMathOperator{\Tr}{Tr}
\newtheorem{proposition}{Proposition}
\newcommand*\dif{\mathop{}\!\mathrm{d}}
\definecolor{darkblue}{rgb}{0,0.08,0.45}
\icmltitlerunning{Coordinate-wise Control Variates for Deep Policy Gradients}
\begin{document}

\twocolumn[
\icmltitle{Coordinate-wise Control Variates for Deep Policy Gradients}

\icmlsetsymbol{equal}{*}

\begin{icmlauthorlist}
\icmlauthor{Yuanyi Zhong}{to}
\icmlauthor{Yuan Zhou}{to,go}
\icmlauthor{Jian Peng}{to}
\end{icmlauthorlist}

\icmlaffiliation{to}{Department of Computer Science, UIUC}
\icmlaffiliation{go}{Department of Industrial and Enterprise Systems Engineering, UIUC}

\icmlcorrespondingauthor{Yuanyi Zhong}{yuanyiz2@illinois.edu}

\icmlkeywords{Machine Learning, ICML, Reinforcement Learning, Policy Gradients}

\vskip 0.3in
]

\printAffiliationsAndNotice{}  

\thispagestyle{plain}
\pagestyle{plain}

\begin{abstract}
  The control variates (CV) method is widely used in policy gradient estimation to reduce the variance of the gradient estimators in practice. A control variate is applied by subtracting a baseline function from the state-action value estimates. Then the variance-reduced policy gradient presumably leads to higher learning efficiency. Recent research on control variates with deep neural net policies mainly focuses on scalar-valued baseline functions. The effect of vector-valued baselines is under-explored. This paper investigates variance reduction with coordinate-wise and layer-wise control variates constructed from vector-valued baselines for neural net policies. We present experimental evidence suggesting that lower variance can be obtained with such baselines than with the conventional scalar-valued baseline. We demonstrate how to equip the popular Proximal Policy Optimization (PPO) algorithm with these new control variates. We show that the resulting algorithm with proper regularization can achieve higher sample efficiency than scalar control variates in continuous control benchmarks.
\end{abstract}

\section{Introduction}

Policy gradient methods have been empirically very successful in many real-life reinforcement learning (RL) problems, including high-dimensional continuous control and video games \citep{schulman2017proximal,mnih2016asynchronous,sutton2000policy,williams1992simple}. Compared to alternative approaches to RL, policy gradient methods have several properties that are especially appealing to practitioners. For example, they directly optimize the performance criterion hence can continuously produce better policies. They do not require the knowledge of the environment dynamics (model-free) thus have broad applicability. They are typically on-policy, i.e., estimating the gradients with on-policy rollout trajectories, thus are easy to implement without the need of a replay buffer.

However, in real life, policy gradient methods have long suffered from the notorious issue of high gradient variance since Monte Carlo methods are used in policy gradient estimation \citep{sutton2000policy,sutton2018reinforcement}. Optimization theory dictates that the convergence speed of stochastic gradient descent crucially depends on the variance of the stochastic gradient \citep{ghadimi2016mini}. In principle, a lower variance policy gradient estimator should lead to higher sample efficiency.

Therefore, for a long time, people have investigated various ways to reduce the variance of policy gradient estimators. The control variates (CV) method, a well-established variance reduction technique in Statistics \citep{rubinstein1985efficiency,nelson1990control}, is a representative approach \citep{mnih2016asynchronous,schulman2015trust,schulman2017proximal,sutton2018reinforcement}. In order to reduce the variance of an unbiased Monte Carlo estimator, a control variate with known expectation under the randomness is subtracted from (and the expectation added back to) the original estimator. The resulting estimator would still be unbiased but have lower variance than the original estimator if the control variate correlates with the original estimator.

Researchers in reinforcement learning utilize a baseline function to construct the control variate. The term ``baseline function'' refers to the function that is subtracted from the return estimates when computing policy gradients. Consider a state dependent baseline function $b(\rvs)$, then the gradient estimator is $\nabla \log \pi(\rvs,\rva) \left( \hat{Q}(\rvs, \rva) - b(\rvs) \right)$. Here, the baseline $b$ corresponds to the control variate $ \nabla \log \pi(\rvs,\rva) b(\rvs) $. A frequently-used baseline is the approximate state value function $b(\rvs) = V(\rvs)$ due to its convenience and simplicity. The value function baseline is not optimal in terms of variance reduction though. The optimal state-dependent baseline is known as the squared-gradient-norm weighted average of the state-action values \citep{weaver2001optimal,greensmith2004variance,zhao2011analysis}.

Baseline functions and control variates have contributed to the success of policy gradients in modern deep RL \citep{schulman2017proximal,mnih2016asynchronous,chung2020beyond} and many works have followed up. For example, one line of follow-up work designs state-action dependent baselines rather than state-only dependent baselines to achieve further variance reduction \cite{liu2018action,grathwohl2018backpropagation,wu2018variance}. Nevertheless, prior works only consider the scalar-valued functions as baselines. Scalar-valued baselines ignore the differences between gradient coordinates. The fact that the policy gradient estimator is a random vector rather than a single random variable has been largely overlooked. Since the gradient is a vector, it is possible to employ a vector-valued function as the baseline to construct coordinate-wise control variates. In other words, it is feasible to apply a separate control variate for each coordinate of the gradient vector in order to achieve even further variance reduction.

In this paper, we investigate this new possibility of vector-valued baseline functions through extensive experiments on continuous control benchmarks. These new families of coordinate-wise and group-wise (layer-wise) control variates derived from the vector-valued baselines respect the differences between the parameter coordinates and sample points. We equip the popular Proximal policy optimization (PPO) algorithm \citep{schulman2017proximal} with the new control variates. The experiment results demonstrate that indeed further variance reduction and higher learning efficiency can be obtained with these more sophisticated control variates than the conventional scalar control variates.
\section{Related work}

\paragraph{Policy gradient (PG) methods.} There is a large body of literature on policy gradient methods  \citep{williams1992simple,sutton2000policy,konda2000actor,kakade2002natural,peters2006policy, mnih2016asynchronous,schulman2015trust,schulman2017proximal,gu2017interpolated,fakoor2020p3o}. They are a class of reinforcement learning algorithms that leverage gradients on a parameterized policy to do local policy search. In particular, the Policy Gradient Theorem proves the correctness of the general approach \citep{sutton2000policy}. The REINFORCE algorithm \citep{williams1992simple} can be thought of as PG with Monte Carlo estimated episodic return. The actor-critic variants (e.g., Advantage Actor-Critic) \citep{konda2000actor,mnih2016asynchronous} learn a value function approximation at the same time as policy learning, and adopt the bootstrapped value estimates rather than the (fully) Monte Carlo estimates to compute the gradients, which has led to better learning efficiency. Generalized Advantage Estimation (GAE) considers a geometric average of the bootstrapped values from different time steps as a low variance estimator for the advantage function \citep{schulman2015high} in a way similar to TD($\lambda$) \citep{sutton2018reinforcement}. Trust Region Policy Optimization \citep{schulman2015trust} and Proximal Policy Optimization \citep{schulman2017proximal} derive variants of PG by extending ideas from Conservative Policy Iteration \citep{kakade2002approximately} and Natural Policy Gradient \citep{kakade2002natural} into the deep learning regime.

\vspace{-5pt}
\paragraph{Variance reduction with control variates.}
The control variates method was seen in Monte Carlo simulation \citep{rubinstein1985efficiency,nelson1990control,glynn2002some} and finance \citep{glasserman2003monte}.
Its application in reinforcement learning or Markov decision processes dates back to at least \cite{weaver2001optimal,greensmith2004variance}. Although the most common choice of baseline in practice is the value function \cite{schulman2017proximal,mnih2016asynchronous}, it is known that the optimal scalar-valued state-dependent baseline is the Q values weighted by the squared gradient norms of the policy function \citep{weaver2001optimal,greensmith2004variance,zhao2011analysis,peters2006policy}.

Many have followed this line of work to construct better control variates. For example, \citet{gu2017q}, \citet{liu2018action}, \citet{grathwohl2018backpropagation} and \citet{wu2018variance} developed \emph{(state-)action-dependent} control variates to reduce the variance further, since the action-dependent CV can better correlate with original PG estimator. The action-dependent CVs are later more carefully analyzed by \citet{tucker2018mirage}. Recently, \citet{cheng2020trajectory} extended action-dependent CV to trajectory-wise CV by exploiting the temporal structure. An earlier report \citep{pankov2018reward} was motivated similarly and presented the similar main result to \citet{cheng2020trajectory}. Nevertheless, these work consider scalar-valued baseline functions, i.e., a single control variate for the whole PG. 
\citet{huang2020importance} showed many policy gradient estimators can be derived from finite difference of importance sampling evaluation estimators, and proposed a general form of PG that subsumes \citep{cheng2020trajectory} as a special case. Their additional term takes the form of gradient of state-action baselines which can be viewed as vector-valued CV.

An early work \citep{peters2006policy} described the possibility of closed-form coordinate-wise baselines in the context of robotics and \emph{shallow} policy parameterization (in fact, linear function as the mean of a Gaussian). Our paper differs from them in that (1) we demonstrate the effectiveness of coordinate CV in the context of deep neural nets and modern policy optimization algorithm (PPO) rather than vanilla policy gradients; (2) we study \emph{fitted} baselines instead of closed-form baselines; (3) we investigate the possibility of novel layer-wise CVs specifically for neural nets which is in-between coordinate-wise and scalar CVs.

\vspace{-5pt}
\paragraph{Other variance reduction techniques.} Besides the control variates method, there exist other tools to build low variance gradient estimators. Rao-Blackwellization is a classic information-theoretic variance reduction technique that has been applied in estimating gradients of discrete distribution \citep{liu2019rao}.
Others seek to apply well-established stochastic optimization techniques in convex optimization literature, such as the Stochastic Variance Reduced Gradient (SVRG) \citep{johnson2013accelerating,reddi2016stochastic} and the Stochastic Average Gradient (SAG) \citep{schmidt2017minimizing} to policy optimization \citep{papini2018stochastic} and deep Q-learning \citep{anschel2017averaged}. Variance reduction is also possible through action clipping if we allow the final estimator to be biased \citep{cheng2019control}.
AVEC \citep{flet2021learning} proposed to use the residual variance as an objective function for the critic, which leads to better value function estimates and better low variance gradient estimator.

\setlength{\textfloatsep}{1\baselineskip plus 0.2\baselineskip minus 0.7\baselineskip}

\section{Method}
\label{sec:method}

We review the variance reduction with control variates technique in policy gradient methods and introduce the coordinate-wise control variates and demonstrate how to integrate them with PPO.

\subsection{Background}

Reinforcement learning (RL) is often formalized as maximizing the expected cumulative return obtained by an agent in a Markov decision process $(\gS, \gA, \gP, r)$. Here the set $\gS$ is the state space, the set $\gA$ is the action space, $\gP(\rvs_{t+1} | \rvs_t, \rva_t)$ is the (probabilistic) transition function (dynamics), and $r(\rvs, \rva)$ is the reward function. The infinite-horizon $\gamma$-discounted return setting defines the expected cumulative return as
\begin{equation}
    J = \E \left[ \sum_{t=0}^{\infty} \gamma^t r(\rvs_t, \rva_t) \right] .
\end{equation}

The policy gradient method is a popular class of model-free RL methods that directly optimizes the return of a parameterized policy $\pi_\theta$, by treating RL as a stochastic optimization problem and applying gradient ascent. The gradient is obtained by the Policy Gradient Theorem \citep{sutton2000policy} as in Eq.~\ref{eq:pg}.
\begin{align}
    \label{eq:pg}
    \nabla_\theta J(\theta) = \E_{\rvs,\rva \sim \pi_\theta} \nabla_\theta \log \pi_\theta(\rva|\rvs) Q_{\pi_\theta}(\rvs, \rva)
    .
\end{align}
The shorthand notation $\rvs,\rva \sim \pi_\theta$ denotes that the expectation is taken over the state-action distribution induced by $\pi_\theta$. Specifically, $\rvs$ is drawn from the (un-normalized) discounted state distribution $\sum_{t=0}^\infty \gamma^t \Pr[\rvs_t = s]$\footnote{If we instead define the normalized state distribution by multiplying $1-\gamma$, there needs to be a $\frac{1}{1-\gamma}$ coefficient in front of the policy gradient in Eq.~\ref{eq:pg}.} and $\rva|\rvs$ is drawn from the conditional action distribution $\pi_\theta(a|\rvs)$. $Q_{\pi_\theta}$ refers to the state-action value function associated with the policy $\pi_\theta$. Given the Q value estimates $\hat{Q}_{\pi_\theta}$, one can easily construct the following Monte Carlo estimator of the policy gradient with a state-action pair sample (or with $n$ i.i.d. state-action pairs and the average of the same constituting estimator).
\begin{align}
    \label{eq:g_0}
    \rvg^0 = \nabla_{\theta} \log \pi_\theta(\rva|\rvs) \hat{Q}_{\pi_\theta}(\rvs, \rva)
    \\
    \text{ where }
    \rvs,\rva \sim \pi_\theta
    \nonumber
\end{align}

Optimization theory predicts that the convergence rate of stochastic gradient descent depends crucially on the variance of the gradient estimator. For example, \citet{ghadimi2016mini} states that the iteration complexity of mirror descent methods to an $\eps$-approximate stationary point in non-convex problems follows
$
    O\left( \frac{ \sigma^2 }{\eps^2} \right)
$
where $\sigma^2 = \Tr \Var[\text{gradient}]$.

Therefore, for decades, researchers have studied various ways to reduce the variance of the policy gradient estimators. The control variates (CV) method is a prominent one. In its generic form, suppose $g$ is the original unbiased Monte Carlo estimator, $c$ is a control variate which correlates with $g$ and has known expectation, then $g' = g - c + \E[c]$ is an unbiased estimator to the same quantity as $g$ but with lower variance. In the policy optimization context, a new estimator is constructed in Eq.~\ref{eq:g_b} by subtracting, for instance, a state-dependent baseline function $b(\rvs)$ from the Q value estimation term. This effectively constructs the control variate $\nabla_{\theta} \log \pi_\theta(\rva|\rvs) b(\rvs) $.
\begin{align}
    \label{eq:g_b}
    \rvg^b = \nabla_{\theta} \log \pi_\theta(\rva|\rvs) (\hat{Q}_{\pi_\theta}(\rvs, \rva) - b(\rvs))
    \\
    \text{ where }
    \rvs,\rva \sim \pi_\theta
    \nonumber
\end{align}

This is valid because, for any real-valued function $b(\rvs)$, the expectation of the control variate is
\begin{align*}
    &\; \E_{\rvs} \left[ \E_{\rva | \rvs \sim \pi_\theta} \left[ \nabla_{\theta} \log \pi_\theta(a | \rvs) b(\rvs) \right] \right]  \\
    =&\; \E_{\rvs} \left[ b(\rvs) \int_{a} \pi_\theta(a | \rvs) \nabla_{\theta} \log \pi_\theta(a | \rvs) \dif a \right]  \\
    =&\; \E_{\rvs} \left[ b(\rvs) \int_{a} \nabla_{\theta} \pi_\theta(a | \rvs) \dif a \right]  \\
    =&\; \E_{\rvs} \left[ b(\rvs) \nabla_{\theta} \left( \int_{a}  \pi_\theta(a | \rvs) \dif a \right) \right]  \\
    =&\; \E_{\rvs} \left[ b(\rvs) \nabla_{\theta} 1 \right] = 0
    .
\end{align*}

A common choice of baseline function in practice \citep{schulman2015trust,mnih2016asynchronous,schulman2017proximal} is the estimated version $\hat{V}_{\pi_\theta}$ of the expected value
$ V_{\pi_\theta}(\rvs) = \E_{\rva|\rvs \sim \pi_\theta} Q_{\pi_\theta}(\rvs, \rva) $ through function approximation due to simplicity. We call this estimator with value function baseline as $\rvg^v$ in Eq.~\ref{eq:g_v}.
\begin{align}
    \label{eq:g_v}
    \rvg^v = \nabla_{\theta} \log \pi_\theta(\rva|\rvs) (\hat{Q}_{\pi_\theta}(\rvs, \rva) - \hat{V}_{\pi_\theta}(\rvs))
    \\
    \text{ where }
    \rvs,\rva \sim \pi_\theta
    \nonumber
\end{align}

\subsection{Minimum variance control variates}

The variance of a $d$-dimensional policy gradient estimator $\rvg$ can be defined in one way as the trace of the (co)variance matrix, i.e.
\begin{equation}
    \label{eq:var_g}
    \sV[\rvg] = \Tr \Var[\rvg] = \sum_{j=1}^d \Var[\rvg_j]
    = \sum_{j=1}^d \E[\rvg_j^2] - \E[\rvg_j]^2
\end{equation}

A minimum variance control variate can be found by explicitly minimizing $\sV[\rvg^b]$ with respect to $b$. After inspection, one can notice that only the first term of Eq.~\ref{eq:var_g} depends on $b$ while the second term does not. By minimizing $\sum_{j=1}^d \E[{\rvg^b_j}^2]$ w.r.t. $b$, we can easily arrive at the following optimal scalar-valued state-dependent baseline \citep{weaver2001optimal,greensmith2004variance,zhao2011analysis}.
\begin{equation}
\label{eq:g_squared}
\begin{aligned}
    \sum_{j=1}^d \E[{\rvg^b_j}^2] 
    & = \E \sum_{j=1}^d \big( \frac{\partial}{\partial \theta_j} \log \pi_\theta(\rva|\rvs) \big)^2
    \big( \hat{Q}_{\pi_\theta}(\rvs, \rva) - b(\rvs) \big)^2
    \\
    & = \E\left[ \| \nabla_{\theta} \log \pi_\theta(\rva | \rvs) \|^2
    \big( \hat{Q}_{\pi_\theta}(\rvs, \rva) - b(\rvs) \big)^2 \right]
    .
\end{aligned}
\end{equation}
\begin{equation}
\label{eq:best_scalar_b}
\begin{aligned}
    &\text{For any state $\rvs$,  } \frac{\partial}{\partial b(\rvs)} \sum_{j=1}^d \E[{\rvg^b_j}^2] = 0  \implies\\
    &b^*(\rvs) = \frac{\E_{\rva|\rvs \sim \pi_\theta} \left[
    \| \nabla_{\theta} \log \pi_\theta(\rva | \rvs) \|^2
    \hat{Q}_{\pi_\theta}(\rvs, \rva)
    \right] }
    {\E_{\rva|\rvs \sim \pi_\theta} \| \nabla_{\theta} \log \pi_\theta(\rva | \rvs) \|^2 }
    .
\end{aligned}
\end{equation}

\subsection{Coordinate-wise and layer-wise control variates}

Now, is Eq.~\ref{eq:best_scalar_b} truly optimal? The answer is yes, in the sense if we restrict ourselves to scalar-valued baseline functions $b(\rvs): \gS \mapsto \sR$, which is essentially assigning the same control variate to all coordinates of the gradient vector. However, scalar-valued baselines disrespect the difference between coordinates. If we relax the baseline function space to vector-valued functions, namely $c(\rvs) = \left( c_1(\rvs), c_2(\rvs), \ldots, c_d(\rvs) \right): \gS \mapsto \sR^d$ where $d$ is the dimension of the gradient, the gradient variance could potentially be further reduced.

Consider the following family of policy gradient estimators with coordinate-wise control variates,
\begin{equation}
\begin{aligned}
    \label{eq:g_coord}
    \rvg^c = (\rvg^c_1, \rvg^c_2, \ldots, \rvg^c_d) ,
    \\
    \rvg^c_j = \frac{\partial}{\partial \theta_j} \log \pi_\theta(\rva|\rvs) \left( \hat{Q}_{\pi_\theta}(\rvs, \rva) - c_j(\rvs) \right)
    \\
    \text{ where }
    \rvs,\rva \sim \pi_\theta
    .
\end{aligned}
\end{equation}

We assign a separate baseline function $c_j(\rvs), j=1\ldots d$ for each coordinate of the gradient vector, or equivalently, a vector-valued baseline $c(\rvs)$ for the whole gradient. Since the space of feasible solutions is enlarged, the minimum variance estimator in this family would be able to achieve a lower variance than the scalar-value counterpart. 
Without any restrictions on function space containing $c_j$, the optimal vector-valued baseline admits a form similar to Eq.~\ref{eq:best_scalar_b}, except that now the weights are the squared partial derivatives instead of the gradient norm:
$$
c_j^*(\rvs) = \frac{\E_{\rva|\rvs \sim \pi_\theta} \left[
    \big( \frac{\partial \log \pi_\theta(\rva | \rvs)}{\partial\theta_j} \big)^2
    \hat{Q}_{\pi_\theta}(\rvs, \rva)
    \right] }
    {\E_{\rva|\rvs \sim \pi_\theta} \big( \frac{\partial \log \pi_\theta(\rva | \rvs)}{\partial\theta_j} \big)^2 } .
$$

When function approximation is used, we have the following proposition that formalizes the intuition and suggests that further variance reduction could be achieved with more sophisticated control variates.

\begin{proposition}[Variance comparison]
    \label{prop:var}
    Given a real-valued function class $\gF: \gS \mapsto \sR $ for $b, c_j$ and $\hat{V}_{\pi_\theta}$. Let the optimal scalar-valued baseline be $$b^* = \argmin_{b\in \gF} \sV[\rvg^b]$$ and the optimal vector-valued baseline be $$c^* = \argmin_{c \in \gF^d} \sV[\rvg^{c}] . $$
    The variances of the policy gradient estimators with the optimal coordinate-wise CV, the optimal scalar CV, and the value function CV satisfy the following relationship:
    \begin{equation*}
        \sV[\rvg^{c^*}] \le \sV[\rvg^{b^*}] \le \sV[\rvg^v] 
        .
    \end{equation*}
\end{proposition}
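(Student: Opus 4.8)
The plan is to read the whole proposition as a statement about minimizing a single variance functional over three nested feasible sets, so that both inequalities follow from set containment rather than from any explicit computation of the optimal baselines. The key preliminary observation is that every estimator in sight is unbiased for the same target: the zero-expectation computation already carried out in the excerpt for the control variate $\nabla_\theta \log \pi_\theta(\rva|\rvs)\, b(\rvs)$ applies verbatim to each coordinate's control variate $\frac{\partial}{\partial\theta_j}\log\pi_\theta(\rva|\rvs)\, c_j(\rvs)$, so $\E[\rvg^b]=\E[\rvg^c]=\E[\rvg^v]=\nabla_\theta J(\theta)$. Consequently, in the decomposition $\sV[\rvg]=\sum_{j=1}^d\big(\E[\rvg_j^2]-\E[\rvg_j]^2\big)$ the mean term $\E[\rvg_j]^2$ is constant across all three families, so comparing variances is meaningful (all estimators target the same gradient) and reduces to comparing second moments.

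For the right inequality $\sV[\rvg^{b^*}]\le\sV[\rvg^v]$, I would simply note that $\hat V_{\pi_\theta}\in\gF$ by hypothesis, so the value-function estimator $\rvg^v$ is exactly $\rvg^b$ evaluated at the feasible choice $b=\hat V_{\pi_\theta}$. Since $b^*$ minimizes $\sV[\rvg^b]$ over all $b\in\gF$, optimality gives $\sV[\rvg^{b^*}]\le\sV[\rvg^{\hat V_{\pi_\theta}}]=\sV[\rvg^v]$.

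For the left inequality $\sV[\rvg^{c^*}]\le\sV[\rvg^{b^*}]$, the device is the diagonal embedding $\gF\hookrightarrow\gF^d$, $b\mapsto(b,\ldots,b)$. Comparing the coordinate-wise definition in Eq.~\ref{eq:g_coord} with the scalar definition in Eq.~\ref{eq:g_b} shows that taking $c_j=b^*$ for every $j$ yields $\rvg^{(b^*,\ldots,b^*)}=\rvg^{b^*}$ coordinate by coordinate. Because $(b^*,\ldots,b^*)\in\gF^d$ is a feasible point of the vector minimization while $c^*$ is optimal over all of $\gF^d$, we obtain $\sV[\rvg^{c^*}]\le\sV[\rvg^{(b^*,\ldots,b^*)}]=\sV[\rvg^{b^*}]$. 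Chaining the two inequalities closes the argument.

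I do not expect a genuine obstacle: once the problem is framed as minimization over nested sets, both inequalities are one-line monotonicity arguments, and the only points to verify carefully are (i) the unbiasedness of the coordinate-wise control variate, which reuses the excerpt's computation applied to each $\theta_j$, and (ii) that the diagonal embedding truly reproduces the scalar estimator. The one assumption worth flagging is that the two $\argmin$'s are attained in $\gF$ and $\gF^d$; if attainment is not assumed one would phrase everything with infima and the same containment argument goes through unchanged. As a remark, since $\sV[\rvg^c]=\sum_j\Var[\rvg^c_j]$ and each summand depends only on $c_j$, the vector minimization in fact decouples into $d$ independent scalar problems, which both explains the closed form for $c_j^*$ stated above and reconfirms that the vector optimum cannot exceed the constrained, all-coordinates-equal scalar optimum.
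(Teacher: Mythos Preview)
Your proposal is correct and matches the paper's own proof essentially line for line: both argue the right inequality by noting $\hat V_{\pi_\theta}\in\gF$ is feasible for the scalar minimization, and the left inequality via the diagonal embedding $c'=(b^*,\ldots,b^*)\in\gF^d$ together with optimality of $c^*$. Your additional remarks on unbiasedness, attainment of the infima, and the coordinate-wise decoupling of $\sV[\rvg^c]$ go slightly beyond what the paper writes but are consistent with it.
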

\begin{proof}
    The proposition is a straightforward consequence of the definition of minimizers.
    
    Comparing $\rvg^c$ and $\rvg^b$, since $c_j$ and $b$ belong to the same basic function class, $c' = (b^*,b^*, \ldots, b^*)$ is a special case such that $c' \in \gF^d$.
    This means the coordinate-wise CV is more expressive than the scalar-valued CV.
    Furthermore, the minimizer $c^*$ in this more expressive function class will be able to achieve a smaller objective value than $c'$. Hence, $\sV[\rvg^{c^*}] \le \sV[\rvg^{c'}] = \sV[\rvg^{b^*}]$.
    
    Comparing $\rvg^{b*}$ and $\rvg^v$, this is trivially true because $ \sV[\rvg^{b^*}] = \min_{b\in \gF} \sV[\rvg^b] \le \sV[\rvg^b], \forall b \in \gF $ and $\hat{V}_{\pi_\theta} \in \gF$ by assumption.
\end{proof}

A middle ground between the scalar-valued and the fully coordinate-wise control variates is the group-wise version. The coordinates of the gradient vector can be partitioned into disjoint groups. Within each group, the same control variate is shared. A natural choice of grouping strategy is to partition by the layers in a neural network (i.e., parameter tensors). We refer to this variant as the \textbf{Layer-wise CV}.

In practice, we may choose neural nets as the baseline function class in our experiments. We build a fully-connected neural net with the same architecture as the value function except for the last layer. For the coordinate-wise baseline function, the last layer outputs $d$ predictions where $d$ is the number of policy parameter coordinates. For the layer-wise baseline function, the number of outputs equals the number of layer weight tensors.

\subsection{Fitting a neural vector-valued baseline}

We use the following loss function to train the vector-valued baseline function $c_{\phi} = (\ldots, c_{j,\phi}, \ldots)$ parameterized by $\phi$. The general form of this loss takes two flexible hyper-parameters $\lambda$ and $\rho$, where $\lambda$ plays the role of interpolating between fitting a value function and minimizing variance, and $\rho$ imposes a proximal regularization term inspired by the original PPO. Intuitively, this loss is still fitting $c$ to the Q values but weighted by the squared derivatives. A coordinate $j \in \{1,2,\ldots,d\}$ or data point $i \in \{1,2,\ldots,n\}$ ($n$ is the number of state-action pairs) with large derivative values may get more weights to reduce the overall gradient variance.
\begin{equation}
\label{eq:l_baseline}
\begin{aligned}
    L^{\text{baseline}}
    = 
    & \frac1{nd} \sum_{i=1}^n \sum_{j=1}^d 
    \bigg\{ 
    \big( \hat{Q}_{\pi_\theta}(\rvs_i, \rva_i) - c_{j,\phi}(\rvs_i) \big)^2
    \\
    & \quad
    \underbrace{
      \cdot 
      \Big( (1-\lambda)
        \overline{ \big( \frac{\partial}{\partial \theta_j} \log \pi_\theta(\rva_i|\rvs_i) \big)^2 }
        + \lambda \Big)
    }_{\text{$\lambda$-interpolated Empirical Variance}}
    \\
    & +
    \underbrace{
      \rho \cdot \big( c_{j,\phi}(\rvs_i) - c_{j,\phi_{\text{old}}}(\rvs_i) \big)^2
    }_{\text{Proximal Regularization}}
    \bigg\}
\end{aligned}
\end{equation}

The over-line in $\overline{ \big( \frac{\partial}{\partial \theta_j} \log \pi_\theta(\rva_i|\rvs_i) \big)^2 }$ indicates that the derivative square is normalized to have 1 as its mean value in a mini-batch, namely $\overline{ \big( \frac{\partial}{\partial \theta_j} \log \pi_\theta(\rva_i|\rvs_i) \big)^2 } = \frac{\big( \frac{\partial}{\partial \theta_j} \log \pi_\theta(\rva_i|\rvs_i) \big)^2}{\frac1n \sum_i \big( \frac{\partial}{\partial \theta_j} \log \pi_\theta(\rva_i|\rvs_i) \big)^2}$.
In this way, the loss function is properly normalized: It is only affected by the relative difference between different coordinates but not the absolute scale of the gradient.
As a special case of Eq.~\ref{eq:l_baseline} when $\lambda=1$, there is no difference between the losses for different $j$, the objective degenerates into fitting a conventional value function approximation. When $\lambda \to 0$, the loss stays more faithfully to the definition of gradient variance.

Since we do mini-batch gradient descent to optimize Eq.~\ref{eq:l_baseline} in practice, using the original definition of variance as objective might not be desirable when the mini-batch sample is small. The new $c_\phi$ might easily over-fit to the current sample due to the large number of free parameters. The proximal term is out of this consideration. This is inpired by PPO, as the original PPO employs a similar kind of proximal methods to update the value function approximator \citep{schulman2017proximal}. When $\rho$ is large, the regularization term encourages the new baseline function to stay close to the old one and induces conservative updates to the baseline parameters.

\begin{algorithm}[t]
\small
Initialize policy $\pi_\theta$, baseline $c_\phi$, and value function\;
\For{$i = 0,1,2,\ldots$} {
    Collect rollout trajectory sample $D = \left\{s_i, a_i, r_i\right\}_{i=1}^n$ with policy $\pi_\theta$\;
    Compute TD($\lambda$) return estimates $\hat{Q}_{\pi_\theta}$ with $D$ and the fitted value function\;
    $\theta_\text{old} \leftarrow \theta, \phi_\text{old} \leftarrow \phi$\;
    Update the vector-valued baseline $c_\phi$ by (Eq.~\ref{eq:l_baseline})
    $\phi \leftarrow \argmin_{\phi} L^{\text{baseline}}(\phi, \phi_{\text{old}}, D, \hat{Q}_{\pi_\theta}) \text{;}$
    \par
    Compute per-coordinate advantages $\hat{A}_j$\ by Eq.~\ref{eq:a_j}\;
    Update the policy with surrogate loss Eq.~\ref{eq:l_ppo_j} by
    $\theta \leftarrow \argmin_{\theta} L^{\text{ppo}}(\theta, \theta_\text{old}, D, \hat{Q}_{\pi_\theta}, \hat{A}_j)  \text{;}$
    \par
    Update the value function as in the original PPO\;
}
\Return $\pi_\theta$\;
\caption{PPO with coordinate-wise CV.}
\label{alg:ppo}
\end{algorithm}

\subsection{Integration with Proximal Policy Optimization}
\label{sec:method_ppo}

We show how to integrate the coordinate-wise control variates with the popular Proximal Policy Optimization (PPO) algorithm \citep{schulman2017proximal}. Since PPO introduces several modifications over the vanilla policy gradient, applying the coordinate-wise control variates requires some non-trivial redesign of the PPO objective functions, as we shall see in Sec.~\ref{sec:method_ppo_sub2}.

\subsubsection{The original PPO objective}

Given a trajectory sample of $n$ state-action-reward tuple $D = \{(s_i,a_i,r_i)\}_{i=1}^n$, PPO defines a surrogate objective motivated by proximal/trust-region methods for policy improvement:
\begin{equation}
\label{eq:ppo}
\begin{aligned}
    L^{\text{ppo}}
    &= \frac1n \sum_{i=1}^n \bigg[
        \min \Big\{
             \frac{\pi_\theta(a_i|s_i)}{\pi^{\text{old}}(a_i|s_i)} \hat{A}(s_i,a_i), \\
          &  \hspace{3em} \Big[\frac{\pi_\theta(a_i|s_i)}{\pi^{\text{old}}(a_i|s_i)}\Big]_{1-\eps}^{1+\eps} \hat{A}(s_i,a_i)
        \Big\}
    \bigg]
    .
\end{aligned}
\end{equation}

In the objective, $\pi^\text{old}$ is the old policy (fixed), $\pi_\theta$ is the policy to be optimized, and $\hat{A}$ is the estimated advantage function usually computed with the Generalized Advantage Estimation (GAE) \citep{schulman2015high}. PPO further defines the following clipped importance sampling ratio:
\begin{align*}
    \omega_i = \omega(a_i|s_i) = \begin{cases}
        \left[ \frac{\pi_\theta(a_i|s_i)}{\pi^{\text{old}}(a_i|s_i)} \right]^{1+\eps} & \text{if $\hat{A}(s_i,a_i) \ge 0$;}
        \\
        \vspace{-6pt}
        \\
        \left[ \frac{\pi_\theta(a_i|s_i)}{\pi^{\text{old}}(a_i|s_i)} \right]_{1-\eps} & \text{if $\hat{A}(s_i,a_i) < 0$.}
    \end{cases}
\end{align*}

Here the notation $[\;\cdot\;]^{1+\eps} = \min\{\;\cdot\;, 1+\eps\}$, $[\;\cdot\;]_{1-\eps} = \max\{\;\cdot\;, 1-\eps\}$, and $[\;\cdot\;]^{1+\eps}_{1-\eps} = \big[[\;\cdot\;]^{1+\eps}\big]_{1-\eps}$.

Moreover, note that the advantage function is implicitly using the value function as the baseline function, i.e., $\hat{A}(s_i,a_i) = \hat{Q}_{\pi_\theta}(s_i,a_i) - \hat{V}_{\pi_\theta}(s_i)$. Computing the GAE $\hat{A}$ is equivalent to computing the TD($\lambda$) for $\hat{Q}$ \citep{schulman2015high,sutton2018reinforcement}.
Then the PPO objective can be rewritten as the weighted sum 
\begin{align*}
    L^{\text{ppo}}
    &= \frac1n \sum_{i=1}^n \left[\omega(a_i|s_i) \hat{A}(s_i,a_i) \right] \\
    &= \frac1n \sum_{i=1}^n \left[\omega(a_i|s_i) \left( \hat{Q}_{\pi_\theta}(s_i,a_i) - \hat{V}_{\pi_\theta}(s_i) \right) \right] .
\end{align*}

Notice that the gradient of the PPO objective for the policy parameter theta can also be expressed in the following compact form using the clipped ratio
\begin{align*}
    \nabla_\theta L^{\text{ppo}}
    &= \frac1n \sum_{i=1}^n \left[ \omega_i \1_{\omega_i \ne 1\pm \eps} \nabla_\theta \log \pi_\theta(a_i|s_i) \hat{A}(s_i,a_i) \right] \\
    &= \frac1n \sum_{i=1}^n \bigg[ \omega_i \1_{\omega_i \ne 1\pm \eps} \cdot \nabla_\theta \log \pi_\theta(a_i|s_i) \\
    & \hspace{5em} \cdot \left( \hat{Q}_{\pi_\theta}(s_i,a_i) - \hat{V}_{\pi_\theta}(s_i) \right) \bigg]
    .
\end{align*}

This now resembles very much to the policy gradient estimators Eq.~\ref{eq:g_v} and Eq.~\ref{eq:g_b} except for the importance weights $\omega_i \1_{\omega_i \ne 1\pm \eps}$. The indicator function $\1_{\omega_i \ne 1\pm \eps}$ occurs due to the way how AutoGrad works. When a sample point is clipped, it does not contribute to the final gradient calculation, as the AutoGrad engine regards $1 \pm \eps$ as constants without gradients. Hence, PPO can be thought of as adaptively dropping sample points to avoid extreme gradient values. In practice, only a small fraction of sample points are clipped and the remaining $\omega_i$ are close to 1. 
Therefore, we can still leverage an empirical variant to Eq.~\ref{eq:g_squared} to fit a baseline function other than $\hat{V}_{\pi_\theta}$ in the hope to reduce the final gradient variance.

\subsubsection{PPO + Coordinate-wise CV}
\label{sec:method_ppo_sub2}

When using the coordinate-wise control variates, i.e., using the corresponding vector-valued baseline function), conceptually, a separate clipped objective is defined for each parameter coordinate. Note the difference from the previous equations in subscripts $j$.

For any $i \in \{1,2,\ldots,n\}$ and $j \in \{1,2,\ldots,d\}$, we analogously have the following set of equations.
\begin{align*}
    \omega_{ij} = \omega_j(a_i|s_i) = \begin{cases}
        \left[ \frac{\pi_\theta(a_i|s_i)}{\pi^{\text{old}}(a_i|s_i)} \right]^{1+\eps} & \text{if $\hat{A}_j(s_i,a_i) \ge 0$;}
        \\
        \vspace{-6pt}
        \\
        \left[ \frac{\pi_\theta(a_i|s_i)}{\pi^{\text{old}}(a_i|s_i)} \right]_{1-\eps} & \text{if $\hat{A}_j(s_i,a_i) < 0$.}
    \end{cases}
\end{align*}
\vspace{-5pt}
\begin{align}
\label{eq:a_j}
    \hat{A}_j(s_i,a_i) = \hat{Q}(s_i,a_i) - c_{j,\phi}(s_i). 
\end{align}
\vspace{-5pt}
\begin{align}
\label{eq:l_ppo_j}
    L^{\text{ppo}}_j
    = \frac1n \sum_{i=1}^n \left[\omega_j(a_i|s_i) \hat{A}_j(s_i,a_i) \right]
    .
\end{align}

This looks cumbersome to compute, as constructing $d$ separate loss functions and doing back-propagation is computationally too expensive. Thankfully, if we inspect the formula for the new objectives' gradients, we notice that the $\frac{\partial}{\partial\theta_j} \log \pi_\theta $ part only needs to be computed once.
\begin{align*}
    \frac{\partial L^{\text{ppo}}}{\partial \theta_j}
    &= \frac1n \sum_{i=1}^n \left[ \omega_{ij} \1_{\omega_{ij} \ne 1\pm \eps} \frac{\partial}{\partial\theta_j} \log \pi_\theta(a_i|s_i) \hat{A}_j(s_i,a_i) \right] \\
    &= \frac1n \sum_{i=1}^n \bigg[ \omega_{ij} \1_{\omega_{ij} \ne 1\pm \eps} \cdot \frac{\partial}{\partial\theta_j} \log \pi_\theta(a_i|s_i) \\
    & \hspace{5em} \cdot \left( \hat{Q}_{\pi_\theta}(s_i,a_i) - c_{j,\phi}(s_i) \right) \bigg]
    .
\end{align*}

Leveraging efficient per-example AutoGrad implementations in deep learning libraries such as PyTorch \citep{paszke2019pytorch}, we can compute the $\nabla \log \pi_\theta(a_i|s_i)$ for all $(s_i,a_i)$ examples in one backward pass. Then the policy gradient is simply the matrix multiplication between $\nabla \log \pi_\theta$ and the per-coordinate advantages. Putting everything together, the modified PPO algorithm with coordinate-wise CV is outlined in Alg.~\ref{alg:ppo}.

There is one caveat here: In theory, the samples used to fit the baseline function and to optimize the policy need to be independent to make policy gradient unbiased. However, in practical implementations of PPO \cite{schulman2017proximal,liu2018action}, this requirement is often loosened since (1) The state-action values are computed \emph{before} the policy optimization; (2) Mini-batching is employed to train the baselines, the value function and the policy function jointly or separately; The stochasticity in the process might decorrelate the data points; (3) PPO is often interpreted as a policy optimization algorithm rather than a vannila policy gradient method; It works as long as local policy improvement can be achieved.
We indeed find during our experimentation that the choice of dependent or independent samples for baseline and policy training do not significantly affect performance.

\section{Experiment}

We first conduct a case study to demonstrate the benefit of treating the coordinates differently and verify the variance reduction with more sophisticated CVs. We then demonstrate that PPO equipped with the proposed CVs has better sample efficiency in simulated control tasks.

\subsection{Case study in Walker2d}

\begin{figure}[tb]
    \centering
    \includegraphics[width=\linewidth]{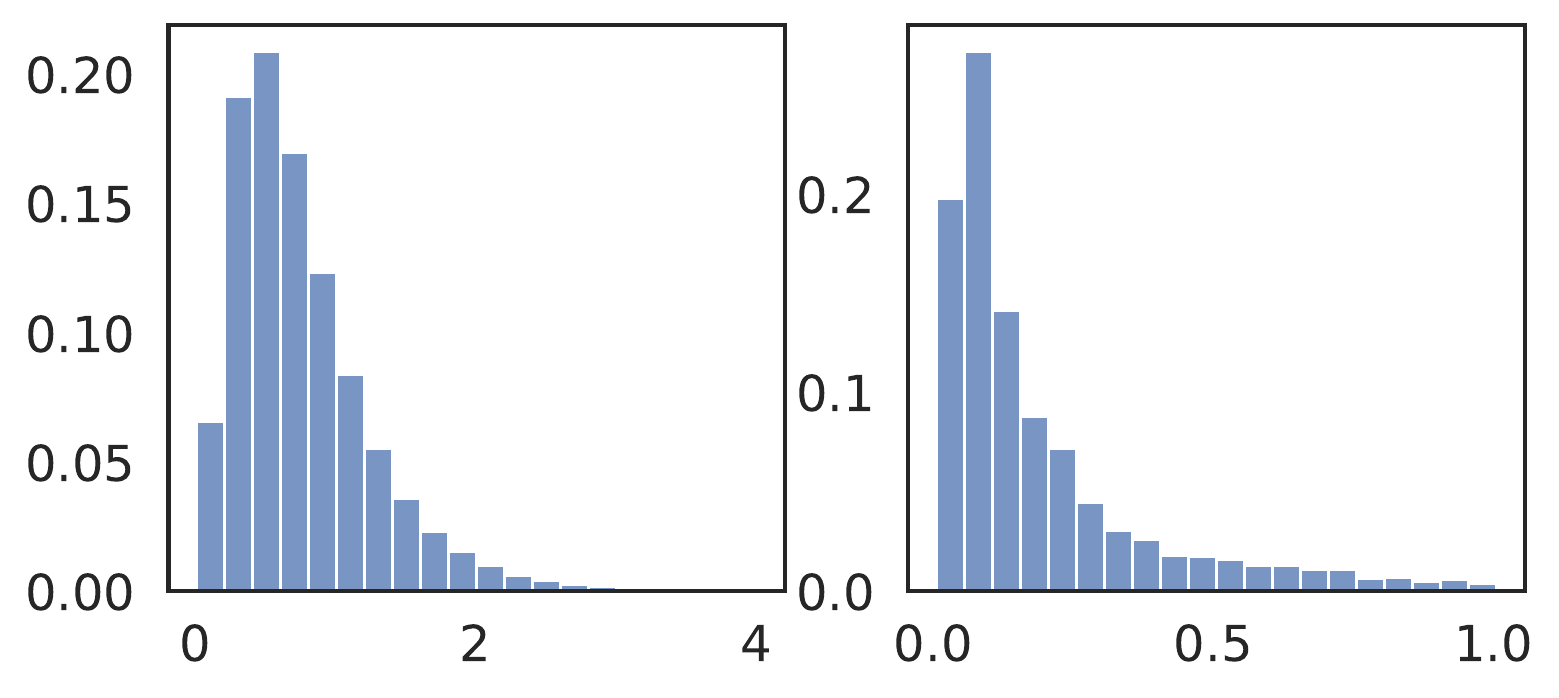}
    \caption{The histograms of $\frac1d \sum_{j=1}^d |\frac{\partial \log \pi_\theta(a_i|s_i)}{\partial \theta_j}|^2$ (left) and $\frac1n \sum_{i=1}^n |\frac{\partial \log \pi_\theta(a_i|s_i)}{\partial \theta_j}|^2$ (right). The squared gradient norms exhibit a tailed distribution. The coordinate-wise CV may benefit from using these norms as weights in a weighted regression.}
    \label{fig:grad_norm}
    \vspace{-10pt}
\end{figure}

\begin{table}[tb!]
    \centering
    \small
    \caption{The empirical variance of different policy gradient estimators derived from different control variates. The smaller the better. The gradients are computed with a fixed policy trained for 300 steps in the Walker2d environment. Value CV refers to using the value function learned previously during RL training as baseline. The other CVs (Value CV (refit), Scalar, Layer-wise and Coord.-wise) are \emph{refitted} with a \emph{new sample} of $10^5$ timesteps with the policy function fixed. Value CV (refit) refers to refitting a new value function as the baseline. Scalar, Layer-wise and Coord.-wise refer to using single baseline function for all coordinates, one baseline per neural net layer, and one baseline per coordinate, respectively, as described in Sec.~\ref{sec:method}.}
    \label{tab:var}
    \begin{tabular}{l|l}
        \toprule
        \bfseries{CV Method}    &  \bfseries{Variance estimates}  \\
        \midrule
        Without CV     &  $ 33355   \quad\; [32844, 33878] $      \\
        Value CV       &  $ 34.759  \quad [34.226, 35.304] $  \\
        Value CV (refit)& $ 22.333  \quad [21.990, 22.683] $    \\
        Scalar CV      &  $ 21.856  \quad [21.521, 22.199] $  \\
        Layer-wise CV  &  $ 21.405  \quad [21.077, 21.741] $  \\ 
        Coord.-wise CV &  $ \textbf{21.259}  \quad [20.933, 21.593] $ \\
        \bottomrule
    \end{tabular}
\end{table}

We take the Walker2d-v2 environment in the OpenAI Gym \citep{brockman2016openai} benchmark suite as the testbed to study the variances of policy gradient estimators with different control variates. Specifically, we pull out a policy checkpoint saved during the training of the regular PPO algorithms, then fit different baseline functions according to Eq.~\ref{eq:l_baseline} (with $\lambda=0.1, \rho=0$) for the fixed policy with a large sample ($10^5$ timesteps). After that, we report the empirical variance measured on another large roll-out sample ($10^5$ timesteps).

We follow the standard approach to estimate the variance by $\widehat{\sigma^2} = \frac1{n-1} \sum_{i=1}^n (g_i - \bar{g})^2 $ and the 95\% confidence intervals by the Chi-Square distribution.

Tab.~\ref{tab:var} shows that more variance reduction can be achieved with more sophisticated control variates for a fixed policy. In particular, the value function baseline is already able to offer a significant amount of variance reduction compared to the policy gradient without CV (Eq.~\ref{eq:g_0}). The layer-wise and coordinate-wise control variate can further reduce the variance by a decent amount. In the supplementary material, we also compare the Mean Squared Error (MSE) of different gradient estimators as they could be biased in practice. We find that more advanced CVs can also reduce the MSE of the gradients. These observations give us hope that the proposed control variates are \emph{capable} of reducing variances and might lead to improved sample efficiency, which we study in the next subsection.

One main difference between coordinate-wise CV and value CV is that the former leverages a weighted regression objective. 
Therefore, we plot the histograms of the weights, i.e., the squared derivative norms of $\log \pi_\theta$ in Fig.~\ref{fig:grad_norm}. Intuitively, weighing the baseline fitting errors by these gradient/derivative norms focuses the training effort onto data points and coordinates that have the highest impact on the policy gradient variance.

\begin{figure*}[tb]
    \centering
    \includegraphics[width=1.0\linewidth]{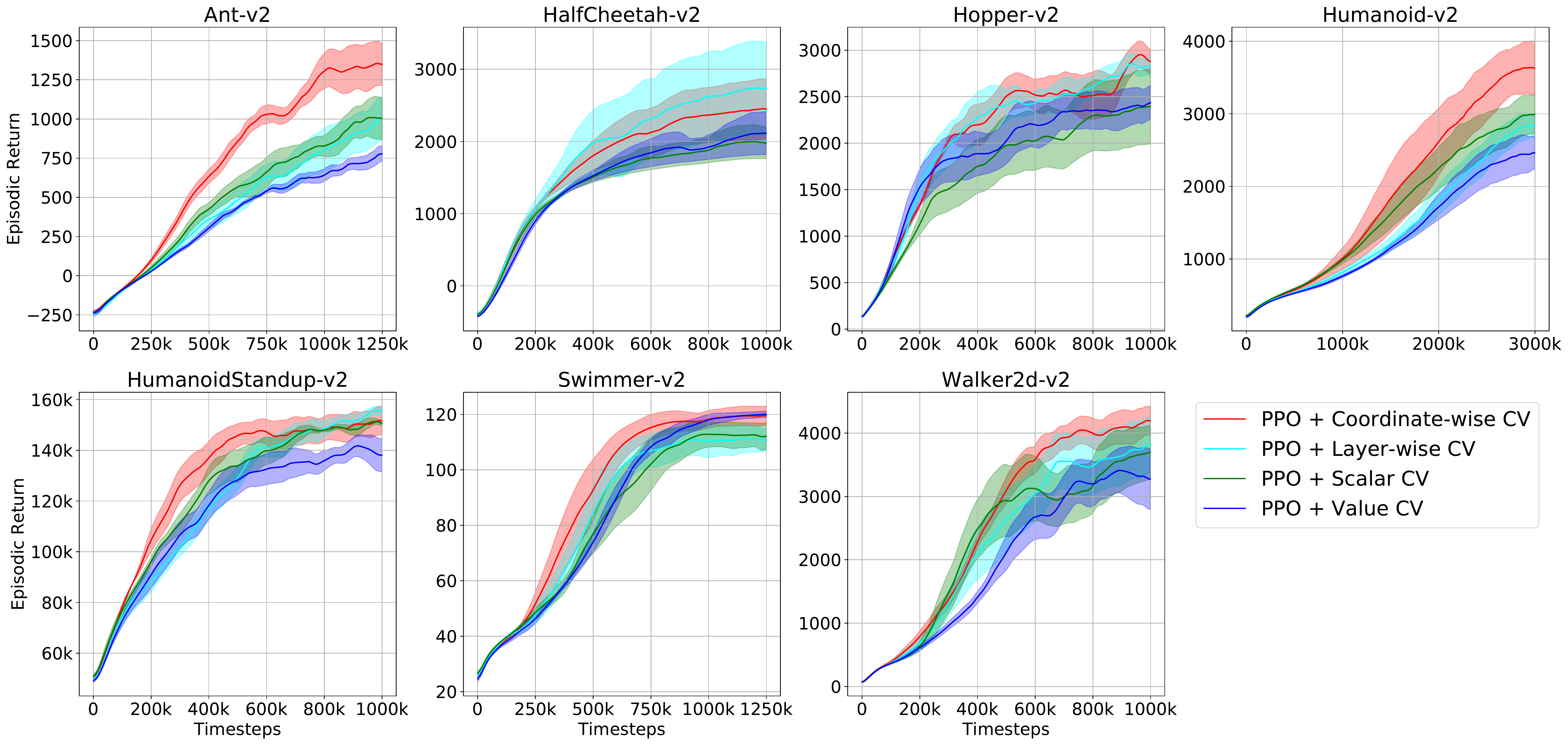}
    \caption{The episodic returns during training, averaged over 4 random seeds. The shades show $\pm 1$ standard error. The horizontal axis represents the number of environment timesteps. We ran 1M timesteps in total except for Humanoid, which requires 3M timesteps. In each subplot, other than the control variates, the rest of the PPO algorithm was the same. The curves are resampled to 512 points and smoothed with a window size 15.}
    \label{fig:mujoco}
    \vspace{-10pt}
\end{figure*}

\begin{table*}[tb!]
    \centering
    \caption{The mean episodic returns of \emph{all episodes} during training by different methods on the MuJoCo continuous control tasks. We compare (PPO plus) the value function CV (the original PPO), the fitted optimal scalar-valued CV, the proposed fitted layer-wise and coordinate-wise CVs. The results are averaged over 4 random seeds, and $\pm$ denotes the 1 standard error. The last column `Improve.' lists the normalized relative improvement over the first row (PPO+Value CV) averaged across the environments. }
    \label{tab:mujoco}
    \small
    \setlength{\tabcolsep}{2.4pt}
    \begin{tabular}{l|lllllll|r}
        \toprule
        {CV} & {Ant} & {HalfCheetah} & {Hopper} & {Humanoid} & {HumanoidStand} & {Swimmer} & {Walker2d} & {Improve.}  \\
        \midrule
        Value  & $156.2 \pm 5.0$   & $1418.3 \pm 128.3$ & $1297.9 \pm 115.9$ & $860.4 \pm 22.1$  & $115451.0 \pm 4934.5$ & $84.4 \pm 1.7$ & $1159.1 \pm 80.4$  & $+0\%$ \\
        Scalar & $210.3 \pm 38.5$  & $1399.9 \pm 115.5$ & $1160.3 \pm 126.2$ & $1034.9 \pm 48.0$ & $122945.1 \pm 2464.2$ & $82.2 \pm 4.8$ & $1337.6 \pm 165.2$ & $+8.9\%$ \\
        \midrule
        Layer  & $189.5 \pm 50.7$  & $1833.4 \pm 473.6$ & $1395.8 \pm 97.3$  & $902.8 \pm 39.9$  & $120230.0 \pm 3109.1$ & $84.3 \pm 5.5$ & $1325.7 \pm 154.3$ & $+11.6\%$ \\
        Coord  & $323.7 \pm 13.8$  & $1694.5 \pm 250.3$ & $1408.6 \pm 28.1$  & $1056.1 \pm 97.4$ & $128267.3 \pm 4620.5$ & $91.0 \pm 4.5$ & $1455.9 \pm 73.7$  & $+28.9\%$ \\
        \bottomrule
    \end{tabular}
    \caption{The mean episodic returns of the \emph{last 100 episodes} rather than all episodes.}
    \label{tab:mujoco100}
    \setlength{\tabcolsep}{1.7pt}
    \begin{tabular}{l|lllllll|r}
        \toprule
        {CV} & {Ant} & {HalfCheetah} & {Hopper} & {Humanoid} & {HumanoidStand} & {Swimmer} & {Walker2d} & {Improve.}  \\
        \midrule
        Value  & $755.3 \pm 58.9$   & $2110.1 \pm 341.2$ & $2420.0 \pm 216.0$ & $2569.2 \pm 184.8$ & $140955.1 \pm 6520$ & $119.9 \pm 1.3$ & $3286.5 \pm 532.7$ & $+0\%$ \\
        Scalar & $998.7 \pm 144.5$  & $1991.1 \pm 263.0$ & $2378.4 \pm 460.2$ & $3038.0 \pm 306.1$ & $150306.6 \pm 2425$ & $112.6 \pm 5.0$ & $3688.2 \pm 487.6$ & $+8.0\%$ \\
        \midrule
        Layer  & $966.8 \pm 152.2$  & $2735.7 \pm 757.7$ & $2849.2 \pm 113.7$ & $2911.0 \pm 172.1$ & $153919.0 \pm 2329$ & $111.1 \pm 5.3$ & $3789.4 \pm 498.0$ & $+15.1\%$ \\
        Coord  & $1337.3 \pm 169.3$ & $2449.7 \pm 460.6$ & $2956.7 \pm 158.5$ & $3599.9 \pm 451.9$ & $151228.4 \pm 5796$ & $119.5 \pm 4.1$ & $4148.7 \pm 282.1$ & $+27.0\%$ \\
        \bottomrule
    \end{tabular}
\end{table*}

\subsection{Continuous control benchmarks}

As described in Section \ref{sec:method_ppo}, the PPO algorithm leverages an adaptive clipped objective function and usually more sophisticated optimizers such as ADAM \citep{kingma2014adam}. Will the variance reduction with more complicated control variates we showed in the previous subsection translate into better sample efficiency in practice? We investigate this question on the popular MuJoCo-based \citep{todorov2012mujoco} continuous control benchmarks provided by OpenAI Gym \citep{brockman2016openai}.

We covered 7 locomotion environments (Gym `-v2' versions): Ant, HalfCheetah, Hopper, Humanoid, HumanoidStandup, Swimmer, Walker2d. The Humanoid experiments are run for 3 million timesteps due to its complexity, while the other experiments are run for 1 million timesteps. Besides the different control variates, other components of PPO remain the same across different methods within each subplot. We tune $\lambda$ and $\rho$ for \emph{each} environment. We refer the readers to the supplementary material for the details, the time and space costs and the ablation study on the hyperparameters.

Fig.~\ref{fig:mujoco} and Tab.~\ref{tab:mujoco}, \ref{tab:mujoco100} demonstrate the advantages of the more complex forms of control variates: They yield faster learning speed particularly in hard environments such as Ant and Humanoid, although the difference in Tab.~\ref{tab:var} is seemingly small. Without much surprise, the fitted optimal scalar-valued baseline outperforms the vanilla value function baseline variant. We observe that the most elaborated coordinate-wise control variate actually leads to the highest sample efficiency, and both the fitted layer-wise and coordinate-wise control variates outperform the value or scalar control variates. The relative improvement over the value function CV is $\ge 27\%$ for coordinate-wise CV and $\ge 11\%$ for layer-wise CV measured in the normalized episodic returns.

On the disadvantageous side, the more complex control variates need larger effort to train, in terms of memory and computation cost. They are also prone to over-fitting the current trajectory sample at hand due to the higher degrees of freedom. 
However, we show it is possible to combat over-fitting and achieve good overall performance by utilizing the regularization introduced in Eq.~\ref{eq:l_baseline}.
Overall, we think the best choice of control variates depends on the trade-off between sample efficiency, computation budget, and hyper-parameter tuning effort. It can be worthwhile to explore the family of coordinate-wise control variates in policy gradient methods.

\section{Conclusion}
We propose the coordinate-wise control variates for variance reduction in deep policy gradient methods. We demonstrate that more variance reduction can be obtained with such CVs compared to the conventional scalar CVs in a case study. We also show that this new family of CVs improves the sample efficiency of PPO in continuous control benchmarks. In the future, we hope to combine the technique with other CVs, choose the hyper-parameters automatically, understand the properties of coordinate CV better and benchmark in real-world robotic problems.

\bibliography{main}

\appendix

\onecolumn
\section{Supplementary experiment details}

\subsection{Hyper-parameters for Sec. 4.2}

We list the hyper-parameters of the PPO algorithm that are shared across all methods in Tab.~\ref{tab:hyperparam}. Note that the Humanoid-v2 is harder than other environments, hence we set a few hyper-parameters differently (number of time-steps, parallel processes, learning rate, and PPO optimization parameters). The choices of some hyper-params that are unrelated to our proposal, such as clipping, GAE, coefficient, follow the original PPO paper.

\begin{table}[h]
    \caption{Hyper-parameters that are shared across methods.}
    \centering
    \begin{tabular}{lll}
        \toprule
        \bfseries Hyper-parameters & \bfseries Other env. & \bfseries Humanoid-v2  \\
        \midrule
        Total num. env steps & 1,000,000 & 3,000,000 \\
        Num. env steps per update & 2048 & 1024 \\
        Num. parallel env & 1 & 4 \\
        Discount factor $\gamma$ & 0.99 & 0.99 \\
        GAE-$\lambda$ & 0.95 & 0.95 \\
        Entropy coefficient & 0 & 0 \\
        Value loss coefficient & 0.5 & 0.5 \\
        Max grad norm & 0.5 & 0.5 \\
        Optimizer & ADAM & ADAM \\
        Learning rate (linearly annealed) & 0.0003 $\rightarrow$ 0 & 0.0001 $\rightarrow$ 0 \\
        Num. mini-batches per PPO epoch & 32 & 64 \\
        Num. PPO epochs per update & 10 & 10 \\
        PPO clip parameter & 0.2 & 0.2 \\
        Network architecture & \multicolumn{2}{l}{Linear[64] - Linear[64] - Linear[out]} \\
        Network activation & TanH & TanH \\
        \bottomrule
    \end{tabular}
    \label{tab:hyperparam}
\end{table}

In Tab.~\ref{tab:hyperparam_baseline}, we also list the per-environment hyper-parameters $(\lambda, \rho)$ accompanying the results in the main text Sec. 4, along with the environment observation and action space sizes. The number of policy net parameters (number of coordinates) depends on the sizes of these two spaces. We adopt a two hidden layer fully connected network architecture for all the policy, value and baseline functions. The hidden layer width is 64. Therefore the number of policy parameter is $\text{Obs. size} \times 64 + 64 + 64\times 64 + 64 + 64 \times \text{Act. size} + \text{Act. size} \times 2$, counting the weights, the biases and the action log-std parameters.

\begin{table}[h]
    \caption{Details about each environment and regularization hyper-parameters for the Sec.~4.2 results regarding fitting Scalar, Layer-wise and Coordinate-wise CV with $L_{\text{baseline}}$.}
    \centering
    \begin{tabular}{llllllll}
        \toprule
        & {Ant} & {HalfCheetah} & {Hopper} & {Humanoid} & {HumanoidStand} & {Swimmer} & {Walker2d}
        \\
        \midrule
        Obs. space     & 111   & 17   & 11   & 376   & 376   & 8    & 17 \\
        Action space   & 8     & 6    & 3    & 17    & 17    & 2    & 6 \\
        \#policy param & 11856 & 5708 & 5126 & 29410 & 29410 & 4868 & 5708 \\
        \midrule
        $\lambda$ & 0.01 & 0.01 & 0.01 & 0.1  & 0.0 & 0.0  & 0.01 \\
        $\rho$    & 0.1  & 0.1  & 0.05 & 0.01 & 0.0 & 0.01 & 0.0 \\
        \bottomrule
    \end{tabular}
    \label{tab:hyperparam_baseline}
\end{table}





\clearpage        
\section{Additional results}

\subsection{Mean squared error plot for Sec. 4.1}

Sec.~4.1 contains a table showing the variance estimates. Here we also draw the Mean Squared Error (MSE) plots of the gradient estimates, varying the size of the sample. The true policy gradient is approximated by a separate large sample of $3\times 10^5$ transitions and the learned value function baseline. Then the MSE is calculated against the approximated true gradient. As we can see, the MSE also becomes smaller with more advanced CVs, suggesting more accurate gradient estimation.

\begin{figure*}[h]
    \centering
    \includegraphics[width=0.5\textwidth]{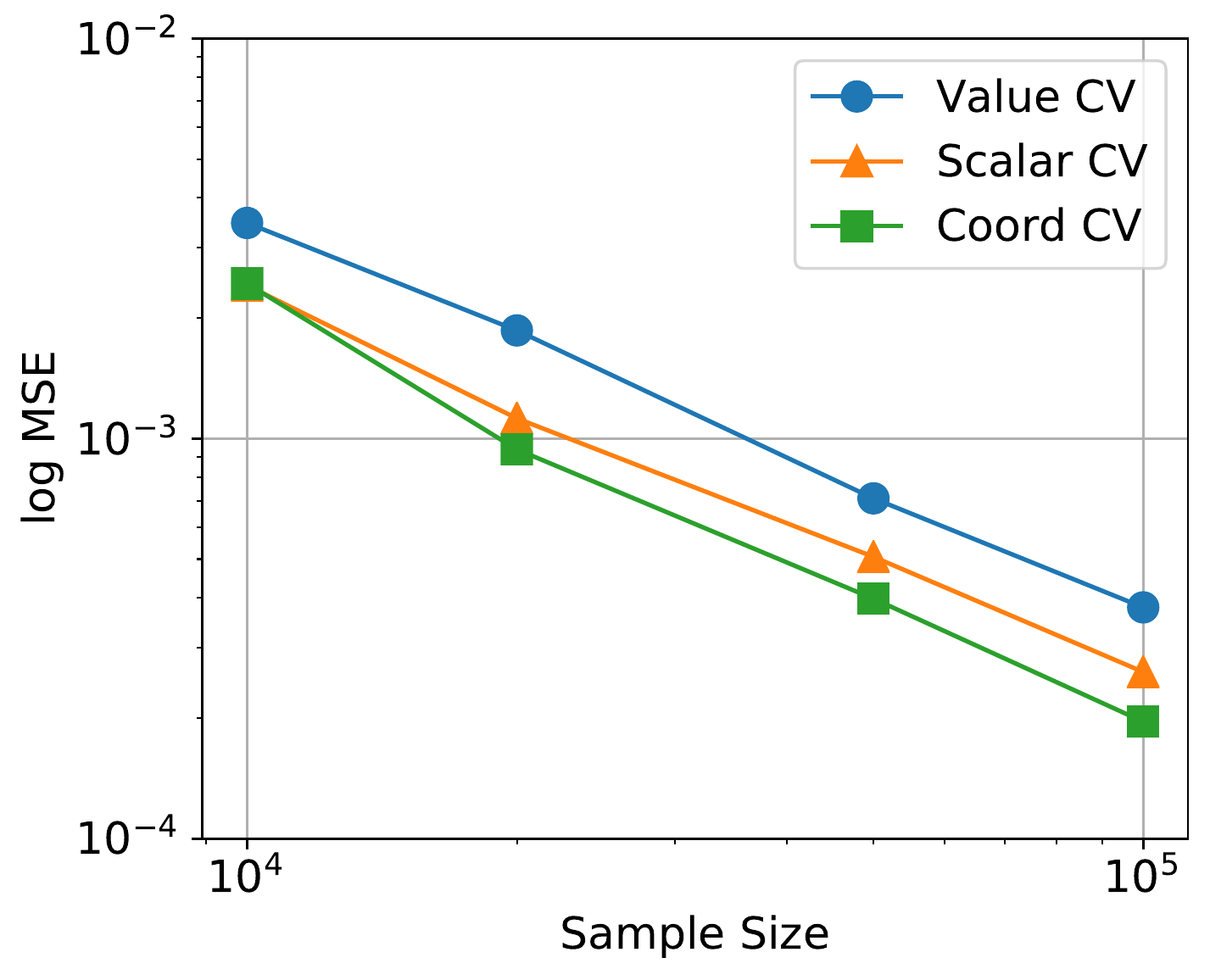}
    \caption{MSE of gradient estimators with different CVs.}
\end{figure*}

\subsection{Space and time comparison}

We take Humanoid-v2 as an example to investigate the space and time overhead of coordinate-wise CVs. Compared with Value CV, the more sophisticated Scalar, Layer, Coord CVs involve storing and training an extra neural network. The number of output units of this neural net depends on the CV used. Layer CV baseline has 7 outputs because there are 7 parameter tensors in the policy network. The number of outputs of Coord CV equals the number of coordinates of the policy network, which is determined by the state and action dimensions. We observe that the time overhead of Coord CV is manageable thanks to the efficient per-example AutoGrad implementation - only 0.6h extra training time compared with the Value CV (relative 0.6/2.19=27\% increase).

\begin{table}[h]
    \caption{Training space and time comparison of PPO with value, scalar, layer-wise and coordinate-wise CVs for the Humanoid-v2 environment, measured with an nVidia GTX 1080 GPU.}
    \centering
    \begin{tabular}{lllll}
        \toprule
        \bfseries{Humanoid}  & Value CV & Scalar CV & Layer CV & Coord CV  \\
        \midrule
        Num. baseline outputs & 1 & 1 & 7 & 29410 \\
        Frames per second & 380 &  309  &  305  & 299  \\
        Total training time & 2.19 hours & 2.67 hours  & 2.73 hours & 2.78 hours  \\
        \bottomrule
    \end{tabular}
    \label{tab:time}
\end{table}

\subsection{Effect of regularization parameter $\lambda$ and $\rho$}

We conduct ablation study on the regularization parameter $\lambda$ and $\rho$ in the baseline fitting loss. $\lambda$ interpolates between the fitting the minimum empirical variance baseline and the value function baseline, while $\rho$ tunes the proximal regularization strength.

\begin{table*}[h]
    \centering
    \caption{The effect of baseline fitting hyper-parameters $(\lambda, \rho)$ on performance. We report the mean episodic returns of \emph{all episodes} during training. All results are the average of 4 random seeds.}
    \small
    \begin{tabular}{l|l|lllll}
        \toprule
        \bfseries Ant   & $(0.01, 0.1)$ & $(0.01, 0.0)$ & $(0.01, 0.05)$ & $(0.0, 0.1)$ & $(0.1, 0.1)$  \\
        \midrule
        Scalar CV     & 210.25  & 179.88 & 189.66  & 194.40 & 146.54
        \\
        Layer-wise CV & 189.53  & 134.69 & 106.40  & 156.53 & 87.90 
        \\ 
        Coord-wise CV & 323.67  & 235.23 & 253.60  & 284.54 & 222.92
        \\
        \bottomrule
        \toprule
        \bfseries HalfCheetah   & $(0.01, 0.1)$ & $(0.01, 0.0)$ & $(0.01, 0.01)$ & $(0.0, 0.1)$ & $(0.1, 0.1)$  \\
        \midrule
        Scalar CV     & 1399.89  & 1554.35 & 1738.73  & 1308.23 & 1298.71
        \\
        Layer-wise CV & 1833.37  & 1808.32 & 1499.38  & 1311.79 & 1591.76
        \\ 
        Coord-wise CV & 1694.47  & 1337.72 & 1391.58  & 1332.84 & 1785.43
        \\
        \bottomrule
        \toprule
        \bfseries Hopper   & $(0.01, 0.05)$ & $(0.01, 0.01)$ & $(0.01, 0.1)$ & $(0.0, 0.01)$ & $(0.0, 0.1)$  \\
        \midrule
        Scalar CV     & 1160.30  & 1412.54 & 1094.03  & 1146.21 & 1183.94
        \\
        Layer-wise CV & 1395.81  & 1228.40 & 1290.87  & 1311.79 & 1365.25
        \\ 
        Coord-wise CV & 1408.58  & 1481.50 & 1403.77  & 1379.06 & 1495.21
        \\
        \bottomrule
        \toprule
        \bfseries Humanoid   & $(0.1, 0.01)$ & $(0.1, 0.05)$ & $(0.05, 0.0)$ & $(0.01, 0.01)$ & $(0.01, 0.05)$  \\
        \midrule
        Scalar CV     & 1034.93  & 1034.47 & 1094.03  & 1022.11 & 991.22
        \\
        Layer-wise CV & 902.80   & 824.81  & 885.59   & 885.32 & 848.48
        \\ 
        Coord-wise CV & 1056.05  & 991.96 & 1007.88  & 1119.19 & 1012.50
        \\
        \bottomrule
        \toprule
        \bfseries HumanoidStandup   & $(0.0, 0.0)$ & $(0.0, 0.01)$ & $(0.01, 0.0)$ & $(0.01, 0.01)$ & $(0.1, 0.01)$  \\
        \midrule
        Scalar CV     & 122945  & 126393 & 110376  & 122647 & 128471
        \\
        Layer-wise CV & 120230  & 107631 & 105924  & 111034 & 113181
        \\ 
        Coord-wise CV & 128267  & 100111 & 115444  & 110293 & 108881
        \\
        \bottomrule
        \toprule
        \bfseries Swimmer   & $(0.0, 0.01)$ & $(0.0, 0.0)$ & $(0.01, 0.0)$ & $(0.01, 0.01)$ & - \\
        \midrule
        Scalar CV     & 82.24  & 74.31 & 76.99  & 77.57 & -
        \\
        Layer-wise CV & 84.31  & 72.94 & 60.84  & 73.79 & -
        \\ 
        Coord-wise CV & 91.04  & 81.63 & 83.40  & 69.50 & -
        \\
        \bottomrule
        \toprule
        \bfseries Walker2d   & $(0.01, 0.0)$ & $(0.0, 0.0)$ & $(0.0, 0.01)$ & $(0.01, 0.01)$ & $(0.01, 0.1)$  \\
        \midrule
        Scalar CV     & 1337.64  & 1202.55 & 1403.35  & 1373.67 & 1363.48
        \\
        Layer-wise CV & 1325.74  & 958.75 & 1406.74  & 1428.29 & 1330.60
        \\ 
        Coord-wise CV & 1455.91  & 1440.88 & 1198.60  & 1301.73 & 1288.50
        \\
        \bottomrule
    \end{tabular}
    \label{tab:ablation}
\end{table*}

\section{Derivations}

\subsection{Derivation of Eq.~8}

Here is a more detailed derivation of the optimal state-dependent baseline. First, we know that minimizing the variance $\sV[\rvg^b]$ is equivalent to minimizing the expected gradient square term.
\begin{align*}
    \argmin_b \sV[\rvg^b] 
    &= \argmin_b \sum_{j=1}^d \E[{\rvg^b_j}^2] 
    \text{\qquad where} \\
    \sum_{j=1}^d \E[{\rvg^b_j}^2]
    \tag{Linearity of expectation} 
    &= \E[\sum_{j=1}^d {\rvg^b_j}^2] = \E \sum_{j=1}^d \big( \frac{\partial}{\partial \theta_j} \log \pi_\theta(\rva|\rvs) \big)^2
    \big( \hat{Q}_{\pi_\theta}(\rvs, \rva) - b(\rvs) \big)^2
    \\
    & = \E_\rvs \E_{\rva|\rvs} \left[ \| \nabla_{\theta} \log \pi_\theta(\rva | \rvs) \|^2
    \big( \hat{Q}_{\pi_\theta}(\rvs, \rva) - b(\rvs) \big)^2 \right]
    .
    \tag{Iterated expectation \& definition of $\nabla$}
\end{align*}

Consider the case where we can freely choose $b(\rvs)$ for every state, which implies $\min_b \E_\rvs \E_{\rva|\rvs}[\cdots] = \E_\rvs [ \min_{b(\rvs)} \E_{\rva|\rvs} [\cdots]]$. Then the optimal solution should satisfy the first order necessary condition:
\begin{align*}
    &0 = \frac{\partial}{\partial b(\rvs)} \E_{\rva|\rvs} \left[ \cdots \right]
    = 2 \E_{\rva|\rvs} \left[ \| \nabla_{\theta} \log \pi_\theta(\rva | \rvs) \|^2
    \big( b(\rvs) - \hat{Q}_{\pi_\theta}(\rvs, \rva) \big) \right]
    \\
    \implies\quad &\E_{\rva|\rvs} \left[ \| \nabla_{\theta} \log \pi_\theta(\rva | \rvs) \|^2 b(\rvs) \right]
    =
    \E_{\rva|\rvs} \left[ \| \nabla_{\theta} \log \pi_\theta(\rva | \rvs) \|^2 \hat{Q}_{\pi_\theta}(\rvs, \rva) \right]
    \\
    \implies\quad &b^*(\rvs) = \frac{\E_{\rva|\rvs \sim \pi_\theta} \left[
    \| \nabla_{\theta} \log \pi_\theta(\rva | \rvs) \|^2
    \hat{Q}_{\pi_\theta}(\rvs, \rva)
    \right] }
    {\E_{\rva|\rvs \sim \pi_\theta} \| \nabla_{\theta} \log \pi_\theta(\rva | \rvs) \|^2 }
    .
\end{align*}

\subsection{Derivation of Eq.~10}

The loss function for the baseline fitting is derived as follows. Similarly, we start by minimizing the variance, then add the regularization to make it better suitable for neural net fitting in practice.
\begin{align*}
    \argmin_{\phi} \sV[{\rvg^{c_\phi}}] =
    \argmin_{\phi} \E[\sum_{j=1}^d {\rvg^c_j}^2] 
    = \argmin_{\phi} \E \sum_{j=1}^d \big( \frac{\partial}{\partial \theta_j} \log \pi_\theta(\rva|\rvs) \big)^2
    \big( \hat{Q}_{\pi_\theta}(\rvs, \rva) - c_j(\rvs) \big)^2
    .
\end{align*}

The objective can be approximated by a Monte Carlo sampling version with $n$ state-action pairs,
\begin{align*}
    \min_{\phi}
    \frac1{nd} \sum_{i=1}^n \sum_{j=1}^d 
    \big( \frac{\partial}{\partial \theta_j} \log \pi_\theta(\rva_i|\rvs_i) \big)^2
    \big( \hat{Q}_{\pi_\theta}(\rvs_i, \rva_i) - c_{j,\phi}(\rvs_i) \big)^2 .
\end{align*}

Notice that this is similar to the usual value function fitting except for the $\frac{\partial}{\partial \theta_j} \log \pi_\theta(\rva_i|\rvs_i)$ weights. We therefore introduce the hyper-parameter $\lambda$ to interpolate between fitting an optimal baseline and fitting a value function. To normalize the numerical scale to match the usual value function learning, we divide the $\big( \frac{\partial}{\partial \theta_j} \log \pi_\theta(\rva_i|\rvs_i) \big)^2$ coefficient by its batch mean, such that
\begin{align*}
    \frac1{nd} & \sum_{i=1}^n \sum_{j=1}^d 
        \Big( (1-\lambda)
        \overline{ \big( \frac{\partial}{\partial \theta_j} \log \pi_\theta(\rva_i|\rvs_i) \big)^2 } + \lambda \Big) = 1.
\end{align*}

Furthermore, inspired by PPO and TRPO, we add a constraint to encourage conservative updates on the baseline:
\begin{align*}
    \min_\phi \frac1{nd} & \sum_{i=1}^n \sum_{j=1}^d 
        \Big( (1-\lambda)
        \overline{ \big( \frac{\partial}{\partial \theta_j} \log \pi_\theta(\rva_i|\rvs_i) \big)^2 }
        + \lambda \Big)
        \big( \hat{Q}_{\pi_\theta}(\rvs_i, \rva_i) - c_{j,\phi}(\rvs_i) \big)^2
    \\
    & \text{s.t.\quad}
    \sum_{i,j} \big( c_{j,\phi}(\rvs_i) - c_{j,\phi_{\text{old}}}(\rvs_i) \big)^2 \le \delta.
\end{align*}

The above problem can be turned into an unconstrained one with appropriate Lagrange multipliers ($\rho$). We pick a fixed $\rho$, resulting in the loss function Eq.~10 to be trained by a few steps of mini-batch gradient descent.

\end{document}